\algnewcommand\And{\textbf{and}}
\DeclareMathOperator*{\Bigcdot}{\scalerel*{\cdot}{\bigodot}}
\DeclarePairedDelimiter{\ceil}{\lceil}{\rceil}
\newtheorem{lemma}{Lemma}
\newtheorem{remark}{Remark}
\newtheorem{proposition}{Proposition}
\newcommand\includegraphicsifexists[2][width=\linewidth]{\IfFileExists{#2}{\includegraphics[#1]{#2}}{}}
\newcommand{\Lhood}{\mathcal{L}}
\newcommand{\yobs}{\Vec{y}_{obs}}
\renewcommand{\algorithmicrequire}{\textbf{Input:}}
\renewcommand{\algorithmicensure}{\textbf{Output:}}
\title{Accelerating Multilevel Markov Chain Monte Carlo Using Machine Learning Models}
\author{ 
\href{https://orcid.org/0000-0001-6882-9737}{\includegraphics[scale=0.06]{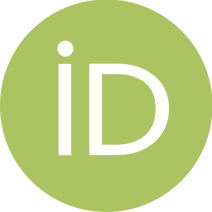}\hspace{1mm}
    Sohail Reddy} \\
	Lawrence Livermore National Laboratory \\
	Liveremore, CA 94550 \\
	\texttt{reddy6@llnl.gov} \\
	\And
	\href{https://orcid.org/0000-0003-3021-0298}{\includegraphics[scale=0.06]{Figures/orcid.png}\hspace{1mm}
	Hillary Fairbanks} \\
	Lawrence Livermore National Laboratory \\
	Liveremore, CA 94550 \\
	\texttt{fairbanks5@llnl.gov} \\
}
\newcommand{\etal}{\textit{et al.}}
\begin{document}
\maketitle

\begin{abstract}

	This work presents an efficient approach for accelerating multilevel Markov Chain Monte Carlo (MCMC) sampling for large-scale problems using low-fidelity machine learning models. While conventional techniques for large-scale Bayesian inference often substitute computationally expensive high-fidelity models with machine learning models, thereby introducing approximation errors, our approach offers a computationally efficient alternative by augmenting high-fidelity models with low-fidelity ones within a hierarchical framework. The multilevel approach utilizes the low-fidelity machine learning model (MLM) for inexpensive evaluation of proposed samples thereby improving the acceptance of samples by the high-fidelity model. The hierarchy in our multilevel algorithm is derived from geometric multigrid hierarchy. We utilize an MLM to acclerate the coarse level sampling. Training machine learning model for the coarsest level significantly reduces the computational cost associated with generating training data and training the model. We present an MCMC algorithm to accelerate the coarsest level sampling using MLM and account for the approximation error introduced. We provide theoretical proofs of detailed balance and demonstrate that our multilevel approach constitutes a consistent MCMC algorithm. Additionally, we derive conditions on the accuracy of the machine learning model to facilitate more efficient hierarchical sampling. Our technique is demonstrated on a standard benchmark inference problem in groundwater flow, where we estimate the probability density of a quantity of interest using a four-level MCMC algorithm.  Our proposed algorithm accelerates multilevel sampling by a factor of two while achieving similar accuracy compared to sampling using the standard multilevel algorithm. 

\end{abstract}

\keywords{Bayesian Inference, Machine Learning, Multilevel Markov Chain Monte Carlo, Stochastic PDE}

\section{Introduction} \label{sec:Intro}

In the realm of statistical inference, Bayesian methods offer a powerful framework for decision making under uncertainty by integrating prior knowledge with observational data. However, the computational complexity of exact Bayesian inference often becomes prohibitive, particularly for models with high-dimensional parameter spaces or complex likelihood functions. To address this challenge, Markov Chain Monte Carlo (MCMC) methods have emerged as indispensable tools \cite{Robert2021}. MCMC algorithms provide a means to approximate posterior distributions by generating samples from them, circumventing the need for analytical solutions which may be intractable or computationally expensive. This has led to widespread use across many disciplines. 
Although MCMC sampling has alleviated the challenges of analytically computing the posterior distribution, the method’s accuracy and sampling efficiency is still directly governed by the likelihood and the forward model. Employing numerical models for likelihood computations requires high-fidelity solutions, often on a finely resolved space-time discretization, which greatly increases the computational cost of the MCMC sampling. 

To enable MCMC sampling with computationally expensive forward maps, reduced order or surrogate models have been extensively employed \cite{Robert2017}. These surrogate models, in the form of Gaussian processes \cite{Paun2019,Smith2024}, deep neural networks \cite{Yan2020,Hortua2020} and radial basis functions \cite{Orlande2008}, have been employed to reduce the computational cost of the forward model at the expense of accuracy. Physics constrained reduced order models, such as those based on proper orthogonal decomposition \cite{Meixin2021,Charumathi2022}, have been investigated to yield more robust predictions at a fraction of the cost. While surrogate and reduced model greatly reduce the cost of the forward map, the approximation error they introduce can lead to inefficient sampling and errors in the statistics. To address this, delayed acceptance MCMC
, as developed in 
\cite{christen2005markov,efendiev2006preconditioning,Golightly2015,Quiroz2018},
uses a combination of low and high fidelity models whereby the low fidelity model is used to inexpensively evaluate candidate proposals before evaluating with a high fidelity model. Although, this two level approach improves sampling efficiency and accuracy by reducing the impact of approximation error, it does not treat the inefficiency of sampling in high dimensional space. To address this issue, multilevel delayed acceptance (MLDA) sampling approaches based on hierarchical decomposition of the sampling space have become an area of active research \cite{Dodwell2015,Lykkegaard2023,Fairbanks2021}. This approach uses sampling in a low dimensional space to inform high dimensional sampling and has been demonstrated on problems in subsurface flows. The extensive work done on surrogate model accelerated MCMC and multilevel MCMC has naturally led to a coupled approach of machine learning model augmented multilevel MCMC, where surrogate models are used to further accelerate levels in the hierarchy. Approximation errors introduced by inaccurate machine learning models can lead to a divergence of the chain to regions of low posterior densities, thereby ill-informing sampling on subsequent finer levels which leads to significant increase in computational cost and decrease in overall efficiency. Lykkegaard \etal \cite{lykkegaard2021accelerating} used a deep neural network (DNN) to accelerate the lowest-fidelity (coarsest) of a two level hierarchy. They reported difficulty in training the DNN models to predict the forward map with sufficient accuracy, as the coefficient (input vector) dimension was smaller than that of the observational data (output vector) on which they trained their model. They remedied this issue by decomposing the observational domain into subdomains and training a separate DNN model to approximate the forward map on each subdomain.

In this work, we consider a geometric multigrid hierarchy of varying level of refinement and present a multilevel MCMC algorithm that increases the efficiency of the coarsest level by augmenting the hierarchy with a DNN model. %
Our approach utilizes a two stage approach on the coarsest level where the first stage performs an inexpensive Metropolis-Hastings step using the DNN model while the second state performs a filtering step using the finite element forward model. This two stage algorithm prevents divergence of the coarse level chain due to the approximation error of the DNN model and is embedded within a multilevel delayed acceptance algorithm to extend the sampling over all the levels of the multigrid hierarchy. 
The technique is demonstrated on a problem in subsurface flow, namely Darcy flow with uncertain permeability coefficient, whereby both the coefficient and Darcy pressure solution are solved within the mixed finite element framework. Dissimilar to the work of \cite{lykkegaard2021accelerating}, we use a PDE-based approach to generate coefficient data used in the MLM training, as opposed to the commonly used Karhunen-Lo\`{e}ve (KL) expansion~\cite{ghanem2003stochastic}. Consequently, our coefficient (input vector) dimension is large enough to mitigate the need for multiple DNN models to predict the observational output. 

The remainder of the paper is organized as follows. The Bayesian inverse problem is introduced in Section~\ref{sec:MCMC}. This includes a description hierarchical Gaussian random field (GRF) sampling, and our filtered, multilevel MCMC algorithm. We prove that our hierarchical approach preserves detailed balance and is a consistent MCMC algorithm. In Section~\ref{sec:Darcy} we present the problem formulation for Bayesian inference in subsurface flows including the structure of our machine learning model. Finally, in Section~\ref{sec:Results} we present the results of our machine learning accelerated MLMCMC algorithm and compare the filtered algorithm against the unfiltered algorithms. Our results demonstrate a factor of two speed up over the standard PDE-only hierarchy with improved statistics while maintaining high accuracy.

\section{Bayesian Inverse Problem: Markov Chain Monte Carlo} \label{sec:MCMC}

Bayesian inverse problems aim to infer the unknown parameters of a system or model by incorporating prior knowledge and observed data in a probabilistic framework. By leveraging Bayes' theorem, which updates prior information encoded in the prior distribution with observational data, $\yobs$, to obtain posterior distributions. Bayesian inverse problems offer a principled approach to estimating parameters and assessing their uncertainty, thereby facilitating informed decision-making and improving the reliability of predictions. While there exist several approaches, such as maximum-a-posteriori (MAP) estimate \cite{Bassett2018} and variational inference \cite{Blei2018}, for approximating the posterior distribution and quantifying uncertainty in model parameters, we employ MCMC sampling.

This work considers multilevel MCMC sampling from a posterior distribution. To that end, we define a hierarchy of maps $\Set{\mathcal{F}_\ell}_{\ell=0}^L$ with increasing fidelity. Each $\mathcal{F}_\ell := \mathcal{D}_\ell \circ \mathcal{S}_\ell(\zeta_\ell)$ maps the parameters being inferred $\zeta$ to model output $\Vec{y}$ and is a composition of $\mathcal{S}_\ell: \zeta_\ell \mapsto \theta_\ell$ corresponding to the sampler for generating GRFs and $\mathcal{D}_\ell: \theta_\ell \mapsto \Vec{y}_\ell$ corresponding to forward model (e.g. Darcy solver). Although a hierarchy of models can be constructed by considering models with increasing complex physics, we consider it a hierarchy of increasingly refined discretizations of a single PDE, that naturally stem from multigrid methods. Hence, $\ell$ denotes the order of refinement. For the remainder of the manuscript, we denote variables pertaining to the surrogate model by ~$\widehat{\cdot}$.

\subsection{Hierarchical Sampling of Gaussian Random Fields} \label{subsec:MCMC:GRB}
Gaussian random fields have found application in several fields including statistical physics, biomedical imaging, machine learning, and parameterizing material distribution \cite{Lemus2021}. Consider a domain $\Omega \subset \R^d~(d=2,3)$ with a boundary $\Gamma := \partial \Omega$, and a probability space $(\Omega_\mathcal{S},\mathcal{F},\mathbb{P})$ where $\Omega_\mathcal{S}$ is the sample space (the set of all possible events), $\mathcal{F}$ is the $\sigma$-algebra of events, and $\mathbb{P}:\mathcal{F} \rightarrow [0,1]$ is a probability measure. We then seek functions $\cbrac{\theta(\Vec{x},\omega) \in L^2(\Omega) | \Vec{x} \in \Omega, \omega \in \Omega_\mathcal{S}}$ that represent realizations of Gaussian random fields that follow $\theta \sim \mathcal{N}(0,\mathcal{C})$ with covariance $\mathcal{C}$. A popular approach for sampling realizations of GRF involves a KL expansion of the covariance kernel~\cite{ghanem2003stochastic}. Although applicable for lower dimensional problem with a spectrally-fast-decaying kernel, the poor scaling of KL decomposition limits its use for large scale applications. Furthermore, KL expansion is often truncated to a finite number of modes, hence only span a subspace. Efforts to include higher frequency modes in the expansion often leads to numerical artifacts and pollution of the GRFs leading to level-dependent truncated expansion \cite{Teckentrup13}. To address these drawbacks, we sample random fields by solving a stochastic partial differential equation (SPDE) using finite element discretization~\cite{Lindgren11}, and scalable multigrid solvers~\cite{Osborn2017, Osborn2017b, Lee17}. The SPDE approach for sampling GRFs considers the following fractional PDE
\eq{FracSPDE}{
    \rbrac{\kappa^2 - \Delta}^{\alpha/2} \theta(\Vec{x},\omega) = g \mathcal{W}(\Vec{x},\omega), \quad~\Vec{x} \in \R^d,~\alpha = \nu + \dfrac{d}{2},~\kappa > 0,~\nu>0
}
where the scaling factor
\expression{
    g = (4 \pi)^{d/4} \kappa^\nu \sqrt{\dfrac{\Gamma(\nu + d/2)}{\Gamma(\nu)}}
}
imposes unit marginal variance of $\theta(\Vec{x},\omega)$ and the white noise $\mathcal{W}(\Vec{x},\omega)$ is an $L^2(\Omega)$ generalized function satisfying
\expression{
    (\mathcal{W}(\Vec{x},\omega),\phi) \sim \mathcal{N}(0,||\phi||^2_{L^2(\Omega)}),\quad \forall \phi \in L^2(\Omega)
}
Setting $\nu = \frac{1}{2}$ in three dimension, we obtain the standard, integer-order, reaction-diffusion equation 
\subeqs{SPDE}{
    \eq{SPDE:1}{
        \Div{\Vec{\rho}} - \kappa^2 \theta = -g \mathcal{W}(\Vec{x},\omega)
    }
    \eq{SPDE:2}{
        \Vec{\rho} = \Grad{\theta}
    }
}
the solution to which yields realizations of GRFs with exponential covariance function 
\eq{ExpCov}{
    \mathrm{cov}(\Vec{x},\Vec{y}) = \sigma^2 e^{-\kappa \Norm{\Vec{x}-\Vec{y}}}
}
We solve the integer-order PDE using the mixed finite element method and consider the function spaces
\expression{
    \Vec{R} = H(\mathrm{div},\Omega) := \cbrac{\Vec{u} \in \Vec{L}^2(\Omega)~|~\mathrm{div}~\Vec{u} \in L^2(\Omega),\Vec{u}\cdot \Vec{n} = 0~\mathrm{on}~\Gamma}
}
\expression{
    \Theta = L^2(\Omega)
}
where $\Vec{L}^2(\Omega) = [L^2(\Omega)]^d$ is the $d$-dimensional vector function space. 

Denote by $\Omega_h := \bigcup_{i=1}^N K_i$ a partition of $\Omega$ into a finite collection of non-overlapping elements $K_i$ with $h := 
\underset{j=1,\ldots,N}{\max} diam(K_j)$. We take $\Vec{R}_h  \subset H(\mathrm{div},\Omega_h) \subset \Vec{R}$ and $\Theta_h  \subset  L^2(\Omega_h) \subset \Theta$ to be the lowest order Raviart-Thomas and piecewise constant basis functions, respectively. Furthermore, we consider $\Vec{\rho} \in \Vec{R}$ and $\theta \in \Theta$ and denote their finite element representation by $\Vec{\rho}_h \in \Vec{R}_h$ and $\theta_h \in \Theta_h$. Introducing test functions $\Vec{v}_h \in \Vec{R}_h$ and $q_h \in \Theta_h$, the weak, mixed form of \eqref{SPDE} reads
\begin{problem}
    Find $\rbrac{\Vec{\rho}_h,\theta_h} \in \Vec{R}_h \times \Theta_h$ such that
    \aligneq{SPDE:Mixed}{
            \rbrac{\Div{\Vec{\rho}_h},q_h} - \kappa^2 \rbrac{\theta_h,q_h} &= -g \rbrac{\mathcal{W}(\omega),q_h}, \quad &\forall \Vec{v}_h \in \Vec{R}_h \\
            \rbrac{\Vec{\rho}_h,\Vec{v}_h} + \rbrac{\Div{\Vec{v}_h},\theta_h} &= 0, \quad &\forall q_h \in \Theta_h
    }
    with boundary conditions $\Vec{\rho}_h \cdot \Vec{n} = 0$.
\end{problem}
with inner products defined as 
\expression{
    \rbrac{\theta,q} = \integral{\Omega}{}{\theta~q~d\Omega},~\mathrm{and} ~\rbrac{\Vec{\rho},\Vec{v}} = \integral{\Omega}{}{\Vec{\rho}\cdot\Vec{v}~d\Omega}
}
For ease of notation, letting $\zeta_h := \rbrac{\mathcal{W}_h(\omega),q_h} = W_h \mathcal{W}_h = W_h^{1/2} \xi_h$, the resulting system is then given by
\eq{SPDE:Saddle}{
    \smatrix{ M_h & B_h^T \\ B_h & -\kappa^2 W_h } \smatrix{ \Vec{\rho}_h \\ \theta_h } = \smatrix{ \Vec{0} \\ -g\zeta_h }
}
where $\Theta_h \ni \xi_h \sim \mathcal{N}(0,I)$, $M_h$ is the mass matrix corresponding to Raviart-Thomas space $\Vec{R}_h$, $W_h$ is the mass matrix corresponding to the space $\Theta_h$, and $B_h$ is the discrete divergence operator. Solution to \eqref{SPDE:Saddle} with independently sampled $\mathcal{W}_h$ yields a realization of a GRF independent of the other levels in the hierarchy. For multilevel sampling, we require the fine-level white noise $\mathcal{W}_h$  to be conditioned on the coarse-level white noise $\mathcal{W}_{H}$ (i.e. $\mathcal{W}_h | \mathcal{W}_{h < H}$).

Consider two levels $\ell$ and $L$ in a hierarchy and their corresponding independent white noise $\mathcal{W}_\ell \sim \mathcal{N}(0,W^{-1}_\ell)$ and $\mathcal{W}_L  \sim \mathcal{N}(0,W^{-1}_L)$, respectively, with $L > \ell$ implying $h < H$ with nested spaces $\Vec{R}_H \subset \Vec{R}_h$ and $\Theta_H \subset \Theta_h$. Here onwards, we replace the subscripts $h$ and $H$ with $L$ and $\ell$, respectively, with $\mathcal{G}_H \subset \mathcal{G}_h \rightarrow \mathcal{G}_\ell \subset \mathcal{G}_L,~\mathcal{G}\in \cbrac{\Vec{R},\Theta}$.
Furthermore, denote by $\Theta_L \ni \mathcal{W}_\ell^L \sim \mathcal{N}(0,W^{-1}_\ell) $, the realization of white noise $\mathcal{W}_\ell$ on level $L$. Using standard multigrid prolongation operator $P$ and restriction operator $\Pi$, we can write $P \Pi \mathcal{W}_\ell^L = P \mathcal{W}_\ell$. Following the work of \cite{Fairbanks2021}, we then construct a hierarchical decomposition of the white noise on level $L$, conditioned on $\mathcal{W}_\ell$, as 
\eq{MLDecomp}{
    \widetilde{\mathcal{W}}_L = \underbrace{P \mathcal{W}_{\ell}}_{\mathcal{Q}_L \mathcal{W}_\ell^L} + (I - \mathcal{Q}_L)\mathcal{W}_L
}
where $\mathcal{Q}_L:\Theta_\ell \mapsto \Theta_L$ is an $L^2$-projection with $\mathcal{Q}_L = P \Pi$, and $\Pi = W_{\ell}^{-1} P^T W_{L}$ with $\Pi P = I$. The first term on the right ($P \mathcal{W}_{\ell}\sim \mathcal{N}(0,W_\ell)$) represents the prolongation of the coarse level white noise to the fine level, while the second term represents the white noise complement in $\Theta_L \setminus \Theta_\ell$. For ease of implementation, we employ the following form of the decomposition of the source term
\aligneq{HLDecomp}{
    \widetilde{\zeta}_L := W_L\widetilde{\mathcal{W}}_L & = W_L P \mathcal{W}_{\ell} + W_L(I - \mathcal{Q}_L)\mathcal{W}_L \\
                      & = W_L P W^{-1}_\ell \zeta_{\ell} + W_L(I - P  W_{\ell}^{-1} P^T W_{L}) W^{-1}_L \zeta_L \\
                      & = \Pi^T \zeta_{\ell} + (I  - \Pi^T P^T ) \zeta_L \\
                      & = \Pi^T W^{1/2}_\ell \xi_{\ell} + (I  - \Pi^T P^T ) W^{1/2}_L \xi_L \sim \mathcal{N}(0,W_L) \\
}
The decomposition \eqref{HLDecomp} can be applied recursively to finer levels to obtain conditioned white noise, and therefore, realizations of conditioned GRFs on finer levels. Furthermore, note that in \eqref{HLDecomp} the matrix square-root is needed. Since, $\Theta_\ell$ is taken to be piecewise constant, $W_\ell$ is diagonal. We employ mesh-embedding \cite{Osborn2017b} to avoid boundary artifacts arising from no-flux boundary conditions on finite domains.

Following this approach, we are now able to sample fine and coarse realizations of white noise from a joint distribution, which is an essential component of multilevel sampling.

\subsection{Multi-Level Markov Chain Monte Carlo} \label{subsec:MCMC:ML}

Our goal is to estimate the posterior mean of a scalar quantity of interest $Q:=Q(\zeta)$
\expression{
    \mathbb{E}_\pi[Q] = \integral{\Omega}{}{Q(\zeta)\pi(\zeta)~d\zeta}
}
where the posterior distribution is $\pi(\zeta) \equiv \pi(\zeta | \Vec{y}_{obs}) \propto \Lhood(\Vec{y}_{obs}|\zeta) \nu(\zeta)$, with prior $\nu(\zeta)$ and likelihood $\Lhood(\yobs|\zeta)$. We estimate the integral by means of Monte Carlo (MC) sampling by drawing samples from the posterior using MCMC
\expression{
    \mathbb{E}_\pi[Q] \approx \overline{Q} = \dfrac{1}{N} \sum_{i=1}^N Q(\zeta^{(i)})
}
where $\zeta^{(i)} \sim \pi(\zeta | \Vec{y}_{obs})$. Using $\zeta_\ell$ as the discrete approximation of white noise on level $\ell$, sampled according the posterior distribution on level $\ell$, $\pi_\ell(\zeta) \equiv \pi(\zeta_\ell | \Vec{y}_{obs})$, we define the discrete QoI approximation on level $\ell$ as $Q_{\ell}:=Q_\ell(\zeta_\ell)$. Then the multilevel decomposition of the expectation is given as
\eq{ML:Expectation}{
    \mathbb{E}_{\pi_L}[Q_L] = \mathbb{E}_{\pi_0}[Q_0] + \sum_{\ell=1}^{L} \rbrac{\mathbb{E}_{\pi_{\ell}}[Q_{\ell}] - \mathbb{E}_{\pi_{\ell-1}}[Q_{\ell-1}]}.
}
We use the estimator $\widehat{Y}_\ell^{N_\ell}$ for the difference $\mathbb{E}_{\pi_\ell}[Q_\ell] - \mathbb{E}_{\pi_{\ell-1}}[Q_{\ell-1}]$, defined as
\eq{ML:Y}{
    \widehat{Y}_\ell^{N_\ell} = \dfrac{1}{N_\ell} \sum_{i=1}^{N_\ell} Y^{(i)}_\ell = \dfrac{1}{N_\ell} \sum_{i=1}^{N_\ell} \rbrac{Q^{(i)}_{\ell} - Q^{(i)}_{\ell-1}},
}
where $Q^{(i)}_{\ell}:=Q_{\ell}(\zeta_\ell^{(i)})$ is the $i^{th}$ independent sample on level $\ell$, and $N_\ell$ is the number of samples performed on level $\ell$. 

Multilevel Monte Carlo (MLMCM) and Multilevel MCMC (MLMCMC) obtain variance reduction by sampling each $Y_\ell^{(i)}= Q^{(i)}_{\ell} - Q^{(i)}_{\ell-1}$ from a joint distribution, with expectation and variance denoted as $\mathbb{E}_{\pi_\ell,\pi_{\ell-1}}[Y_\ell]$ and $\Var[\pi_\ell,\pi_{\ell-1}]{Y_\ell}$, respectively. As described in Section~\ref{subsec:MCMC:GRB}, we obtain these samples via the hierarchical PDE sampler approach. Let the computational cost of generating an independent sample of $Y_\ell$ be denoted as $\overline{C}_\ell$ for $\ell=0,1,\hdots L$. For a given MSE tolerance $\epsilon^2$, the optimal number of samples on each level, $N_\ell$, is calculated as
\eq{EffectiveSampleSize}{
    N_\ell = \dfrac{2}{\epsilon^2} \rbrac{\sum_{k=0}^L \sqrt{\Var[\pi_k,\pi_{k-1}]{Y_k} \overline{C}_k}} \sqrt{\dfrac{\Var[\pi_\ell,\pi_{\ell-1}]{Y_\ell}}{\overline{C_\ell}}}.
}
If the estimate of $\Var[\pi_\ell,\pi_{\ell-1}]{Y_\ell}$ decays with level refinement, fewer simulations will be needed on the finer levels. We refer the reader to Appendix~\ref{Appen:NEffective} for details on this derivation.

We sample from the posterior using a Multi-Level Delayed Acceptance (MLDA) algorithm and generate proposals $\Vec{\zeta}_\ell^P$ on level $\ell$, conditioned on the current state $\Vec{\zeta}_\ell^C$ and the coarse levels using preconditioned Crank-Nicolson
\expression{
    \Vec{\zeta}_\ell^P := \sqrt{1-\beta^2} \Vec{\zeta}_\ell^C + \beta \widetilde{\zeta}_\ell
}
where $\widetilde{\zeta}_\ell$ is the white noise on level $\ell$ conditioned on the coarse level white noise (\eqref{HLDecomp}), and $\Vec{\zeta}_\ell^P \sim q(\Vec{\zeta}|\Vec{\zeta}_\ell^C, \widetilde{\zeta}_\ell) = \mathcal{N}(\sqrt{1-\beta^2} \Vec{\zeta}_\ell^C, \beta^2 W_\ell)$.
\begin{remark}\normalfont
    The preconditioned Crank-Nicolson proposal $q(\zeta^P|\zeta^C)$ with a prior $\nu \sim \mathcal{N}(0,W)$ satisfies $\nu(\zeta^C)q(\zeta^P|\zeta^C) = \nu(\zeta^P)q(\zeta^C|\zeta^P)$.
\end{remark}
Since the multilevel sampling redistributes the majority of the computational costs over the coarser levels, we further accelerate the coarsest level using a machine learning model. This is acheived by first using the machine learning model to evaluate the generated proposals followed by a filtering stage using the PDE model on the same level to filter out poor proposals before proposing them to the subsequent finer levels. This filtering approach, often referred to as \emph{Surrogate-Transistion} \cite{Liu2008}, is presented in Alg \ref{Alg:FilterMCMC}, where the first stage is the standard Metropolis-Hastings algorithm, hence, is ergodic and satisfies detailed balance. Here, we prove the complete, two-stage algorithm also satisfies detailed balance. 
\begin{algorithm}[h] 
    \caption{Generate a single sample on a level using the filtered Metropolis-Hastings algorithm.} \label{Alg:FilterMCMC}
    \algorithmicrequire{~Current state: $\Vec{\zeta}^C$} \\
    \algorithmicensure{~Next state: $\Vec{\zeta}^\star$}
    \begin{algorithmic}
    \State $\Vec{\zeta}^P \sim q(\Vec{\zeta}|\Vec{\zeta}^C)$ \Comment{Generate proposal} 
    \State $\widehat{\alpha} = \min\cbrac{1,\dfrac{\widehat{\Lhood}(\yobs|\Vec{\zeta}^P)}{\widehat{\Lhood}(\yobs|\Vec{\zeta}^C)}}$ \Comment{Compute acceptance ratio} \\
    \If{$\widehat{\alpha} \geq u \sim \mathcal{U}(0,1)$ } \Comment{Filter proposed sample if accepted}
        \State $\Vec{\zeta}^\star \gets$ \Call{Filter}{$\Vec{\zeta}^C$,$\Vec{\zeta}^P$}
    \Else \Comment{Reject the proposed sample}
        \State $\Vec{\zeta}^\star = \Vec{\zeta}^C$
    \EndIf        
    \Procedure{Filter}{$\Vec{\zeta}^C$,$\Vec{\zeta}^P$} \label{Alg:Proc:Filter}
        \State $\alpha = \min\cbrac{1,\dfrac{\Lhood(\yobs|\Vec{\zeta}^P)}{\Lhood(\yobs|\Vec{\zeta}^C)} \dfrac{\widehat{\Lhood}(\yobs|\Vec{\zeta}^C)}{\widehat{\Lhood}(\yobs|\Vec{\zeta}^P)}}$ \Comment{Compute multi-level acceptance ratio}\\
        \If{$\alpha \geq u \sim \mathcal{U}(0,1)$} \Comment{Accept the proposed sample}
            \State \textbf{return} $\Vec{\zeta}^P$
        \Else \Comment{Reject the proposed sample}
            \State \textbf{return} $\Vec{\zeta}^C$
        \EndIf        
    \EndProcedure
    \end{algorithmic}
\end{algorithm}

\begin{proposition} \label{lemma:Filter}
    Algorithm \ref{Alg:FilterMCMC} simulates a Markov chain that is in detailed balance with $\pi(\cdot)$
\end{proposition}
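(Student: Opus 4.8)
The plan is to recognize Algorithm~\ref{Alg:FilterMCMC} as a single Metropolis--Hastings chain targeting $\pi$ in which the first stage supplies an \emph{effective proposal} and the second stage (\textsc{Filter}) is an ordinary Metropolis correction. The first step is to show that the first stage, on its own, is a Metropolis--Hastings step in detailed balance with the surrogate posterior $\widehat{\pi}(\zeta)\propto\widehat{\Lhood}(\yobs|\zeta)\,\nu(\zeta)$. Writing the genuine Metropolis--Hastings ratio for $\widehat{\pi}$ under the proposal $q$ gives $\frac{\widehat{\pi}(\zeta^P)\,q(\zeta^C|\zeta^P)}{\widehat{\pi}(\zeta^C)\,q(\zeta^P|\zeta^C)}=\frac{\widehat{\Lhood}(\yobs|\zeta^P)}{\widehat{\Lhood}(\yobs|\zeta^C)}\cdot\frac{\nu(\zeta^P)\,q(\zeta^C|\zeta^P)}{\nu(\zeta^C)\,q(\zeta^P|\zeta^C)}$, and the pCN identity stated in the Remark forces the second factor to equal $1$. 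Hence the quantity $\widehat{\alpha}$ computed in the algorithm is exactly the Metropolis--Hastings acceptance probability for $\widehat{\pi}$, so the first-stage kernel $\widehat{K}$ satisfies $\widehat{\pi}(\zeta^C)\,\widehat{K}(\zeta^C,\zeta^P)=\widehat{\pi}(\zeta^P)\,\widehat{K}(\zeta^P,\zeta^C)$.

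Next I would introduce the effective proposal density $g(\zeta^P|\zeta^C):=q(\zeta^P|\zeta^C)\,\widehat{\alpha}(\zeta^C,\zeta^P)$, i.e. the (sub-probability) density of proposing $\zeta^P$ and passing the first stage, and write the off-diagonal composite transition kernel as $K(\zeta^C,\zeta^P)=g(\zeta^P|\zeta^C)\,\alpha(\zeta^C,\zeta^P)$ for $\zeta^P\neq\zeta^C$. The crucial algebraic observation is that the \textsc{Filter} ratio rewrites, using $\pi\propto\Lhood\,\nu$ and $\widehat{\pi}\propto\widehat{\Lhood}\,\nu$, as $\alpha(\zeta^C,\zeta^P)=\min\{1,\frac{\pi(\zeta^P)\,\widehat{\pi}(\zeta^C)}{\pi(\zeta^C)\,\widehat{\pi}(\zeta^P)}\}$, because the shared prior $\nu$ cancels between the two likelihood ratios. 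This makes the quantity $\pi(\zeta^C)\,\widehat{\pi}(\zeta^P)\,\alpha(\zeta^C,\zeta^P)=\min\{\pi(\zeta^C)\widehat{\pi}(\zeta^P),\,\pi(\zeta^P)\widehat{\pi}(\zeta^C)\}$ symmetric in its two arguments. Combining this symmetry with the first-stage detailed balance $\widehat{\pi}(\zeta^C)\,g(\zeta^P|\zeta^C)=\widehat{\pi}(\zeta^P)\,g(\zeta^C|\zeta^P)$ (which is just the detailed balance of $\widehat{K}$ restricted to its continuous part) then yields $\pi(\zeta^C)\,K(\zeta^C,\zeta^P)=\pi(\zeta^P)\,K(\zeta^P,\zeta^C)$, which is detailed balance for $\pi$. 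Equivalently, one sees that $\alpha$ is precisely the standard Metropolis--Hastings acceptance ratio for target $\pi$ with proposal kernel $\widehat{K}$, so detailed balance is inherited from the generic Metropolis--Hastings argument.

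I expect the main obstacle to be the careful bookkeeping rather than any deep idea: one must track the two distinct cancellations of the prior $\nu$ (once via the pCN identity in the first-stage ratio, and once between the numerator and denominator of the \textsc{Filter} ratio), and must handle the mixed structure of $\widehat{K}$, which has both a continuous off-diagonal part and an atom at $\zeta^C$ arising from first-stage rejection. Detailed balance for the diagonal case $\zeta^P=\zeta^C$ is trivial, so it suffices to verify the off-diagonal identity above; the rejection atom of the first stage simply contributes to staying put and does not affect the off-diagonal balance. A minor point to state explicitly is that $\pi$ and $\widehat{\pi}$ must share the same prior $\nu$ for the cancellations to go through, which holds here by construction since only the likelihood is surrogated.
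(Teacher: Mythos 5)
Your proposal is correct and takes essentially the same route as the paper's proof: define the effective first-stage kernel $g(\zeta^P|\zeta^C)=q(\zeta^P|\zeta^C)\,\widehat{\alpha}$, which is in detailed balance with $\widehat{\pi}$, write the composite off-diagonal kernel as $g$ times the filter acceptance, and conclude from the symmetry of $\min\bigl\{\pi(\zeta^C)\widehat{\pi}(\zeta^P),\,\pi(\zeta^P)\widehat{\pi}(\zeta^C)\bigr\}$. Your additional bookkeeping (invoking the pCN identity to justify that the likelihood-ratio-only first stage really targets $\widehat{\pi}$, and dismissing the rejection atom) only makes explicit what the paper leaves implicit.
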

\begin{proof}
    Let $g(\zeta^P | \zeta^C) = q(\zeta^P|\zeta^C) \widehat{\alpha}(\zeta^P|\zeta^C)$ be the transistion kernel of the first stage that is in detailed balance with $\widehat{\pi}(\cdot)$ 
    \expression{ 
         ~\widehat{\pi}(\zeta^C)g(\zeta^P|\zeta^C) =  ~ \widehat{\pi}(\zeta^P)g(\zeta^C|\zeta^P)
    }
    Then the second stage has the transistion kernel 
    \expression{
        G(\zeta^P | \zeta^C) = g(\zeta^P | \zeta^C)~\min\cbrac{1, \dfrac{\pi(\zeta^P)}{\pi(\zeta^C)}\dfrac{\widehat{\pi}(\zeta^C)}{\widehat{\pi}(\zeta^P)}}
                             = g(\zeta^P | \zeta^C)~\min\cbrac{1, \dfrac{\Lhood(\yobs|\zeta^P)}{\Lhood(\yobs|\zeta^C)}\dfrac{\widehat{\Lhood}(\yobs|\zeta^C)}{\widehat{\Lhood}(\yobs|\zeta^P)}}
    }
    and satisfies the detailed balance condition
    \aligneq{DetailedBalance:Filter}{
        \pi(\zeta^C) G(\zeta^P | \zeta^C) &= \pi(\zeta^C) g(\zeta^P | \zeta^C)~\min\cbrac{1, \dfrac{\pi(\zeta^P)}{\pi(\zeta^C)}\dfrac{\widehat{\pi}(\zeta^C)}{\widehat{\pi}(\zeta^P)}} \\
                                          &= \dfrac{\widehat{\pi}(\zeta^P) g(\zeta^C | \zeta^P) \pi(\zeta^C)}{\widehat{\pi}(\zeta^C)}~\min\cbrac{1, \dfrac{\pi(\zeta^P)}{\pi(\zeta^C)}\dfrac{\widehat{\pi}(\zeta^C)}{\widehat{\pi}(\zeta^P)}} \\
                                          &= \widehat{\pi}(\zeta^P) g(\zeta^C | \zeta^P)~\min\cbrac{\dfrac{\pi(\zeta^C)}{\widehat{\pi}(\zeta^C)},\dfrac{\pi(\zeta^P)}{\widehat{\pi}(\zeta^P)}} \\
                                          &= \pi(\zeta^P) G(\zeta^C | \zeta^P)
    }
\end{proof}

\begin{remark}\normalfont
    As the accuracy of the surrogate model tends to that of the true model (i.e. $\widehat{\mathcal{F}}(\Vec{\zeta})\rightarrow \mathcal{F}(\Vec{\zeta})$) then $\widehat{\Lhood}(\yobs|\Vec{\zeta})\rightarrow \Lhood(\yobs|\Vec{\zeta})$, and the acceptance ratio of the filter $\alpha \rightarrow 1$. Hence, for $\widehat{\mathcal{F}}(\Vec{\zeta}) = \mathcal{F}(\Vec{\zeta})$, the filter is idempotent and the two-stage algorithm is equivalent to a single-stage Metropolis-Hastings algorithm.
\end{remark}

Since the first stage performs an inexpensive evalution of the proposed sampled with $\widehat{\mathcal{F}}$ followed by a filtering stage with $\mathcal{F}$, it is desirable for the acceptance probability of the second stage to be greater than (or equal to) that of the first stage. 
The filtered-MCMC sampling of the coarsest level is then coupled to the remaining, finer levels of the hierarchy using the standard multi-level delayed acceptance (MLDA) algorithm. This coupled algorithm, given in Alg. \ref{Alg:HMCMC}, further accelerates the complete MLDA hierarchical sampling. Furthermore, this couple sampling satisfies the properties of the MCMC samplers, as demonstrated by the following lemma. 
\begin{algorithm}[h]
    \caption{Generate a single sample on the fine level $L$ conditioned on the coarse level $\ell$.} \label{Alg:HMCMC}
    \algorithmicrequire{~Current state: $\Vec{\zeta}_L^C$, $\Vec{\zeta}_\ell^C$} \\
    \algorithmicensure{~Next state: $\Vec{\zeta}_L^\star$}
    \begin{algorithmic}
    \State Generate $\Vec{\zeta}_\ell^\star$ using (filtered) Metropolis-Hasting using $\Vec{\zeta}_\ell^C$ \Comment{Generate sample on the coarse level}
    \State Compute $\widetilde{\zeta}_L$ using \eqref{HLDecomp} and $\Vec{\zeta}_\ell^\star$ \Comment{Generate coarse-level conditioned white noise}
    \State $\Vec{\zeta}_L^P \sim q(\Vec{\zeta}_L|\Vec{\zeta}_L^C,\widetilde{\zeta}_L)$ \Comment{Generate fine-level proposal} 
    \State $\alpha_{ML} = \min\cbrac{1,\dfrac{\Lhood_L(\yobs|\Vec{\zeta}_L^P) \Lhood_\ell(\yobs|\Vec{\zeta}_\ell^C)}{\Lhood_L(\yobs|\Vec{\zeta}_L^C)\Lhood_\ell(\yobs|\Vec{\zeta}_\ell^\star)}}$ \Comment{Compute multilevel acceptance ratio} \\
    \If{$\alpha_{ML} \geq u \sim \mathcal{U}(0,1)$} \Comment{Accept proposed sample}
        \State $\Vec{\zeta}_L^\star = \Vec{\zeta}_L^P$
    \Else \Comment{Reject the proposed sample}
        \State $\Vec{\zeta}_L^\star = \Vec{\zeta}_L^C$
    \EndIf        
    \end{algorithmic}    
\end{algorithm}

\begin{lemma}
    Algorithm \ref{Alg:HMCMC} with coarse sampling of $\pi_\ell(\cdot)$ by Alg. \ref{Alg:FilterMCMC} simulates a Markov chain that is in detailed balance with $\pi_L(\cdot)$
\end{lemma}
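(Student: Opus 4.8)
The plan is to read Algorithm~\ref{Alg:HMCMC} as a delayed-acceptance step in which the coarse posterior $\pi_\ell$ plays exactly the role the surrogate $\widehat{\pi}$ played in Proposition~\ref{lemma:Filter}, so that the identity \eqref{DetailedBalance:Filter} can be reused with the substitution $\widehat{\pi}\mapsto\pi_\ell$, $\pi\mapsto\pi_L$. The chain state is the pair $(\zeta_L^C,\zeta_\ell^C)$ carried together --- both coordinates update on acceptance and both are retained on rejection --- and ``detailed balance with $\pi_L(\cdot)$'' is to be understood on the fine marginal. I would let $g_L(\zeta_L^P\mid\zeta_L^C)$ denote the proposal kernel, i.e.\ the composition of the coarse transition $\zeta_\ell^C\mapsto\zeta_\ell^\star$ drawn by Algorithm~\ref{Alg:FilterMCMC}, the hierarchical assembly $\widetilde{\zeta}_L$ of \eqref{HLDecomp}, and the conditioned draw $\zeta_L^P\sim q(\zeta_L\mid\zeta_L^C,\widetilde{\zeta}_L)$; the full fine-level kernel is then $K_L=g_L\cdot\alpha_{ML}$.

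The central step is to establish that $g_L$ is itself in detailed balance with $\pi_\ell$, namely $\pi_\ell(\zeta_\ell^C)\,g_L(\zeta_L^P\mid\zeta_L^C)=\pi_\ell(\zeta_\ell^\star)\,g_L(\zeta_L^C\mid\zeta_L^P)$. For this I would combine two ingredients already available: Proposition~\ref{lemma:Filter}, which gives reversibility of the coarse transition against $\pi_\ell$, and the pCN Remark, which gives $\nu(\zeta_L^C)\,q(\zeta_L^P\mid\zeta_L^C,\widetilde{\zeta}_L)=\nu(\zeta_L^P)\,q(\zeta_L^C\mid\zeta_L^P,\widetilde{\zeta}_L)$ so that the prior factors cancel. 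The exactness $\Pi P=I$ from \eqref{HLDecomp} is what ties the coarse content of $\widetilde{\zeta}_L$ to the state $\zeta_\ell^\star$ at which $\Lhood_\ell$ is evaluated.

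Granting that, the rest is the bookkeeping of \eqref{DetailedBalance:Filter}. Since the common prior and normalizing constants cancel, $\pi_L/\pi_\ell\propto\Lhood_L(\yobs\mid\cdot)/\Lhood_\ell(\yobs\mid\cdot)$, so $\min\cbrac{1,\,\dfrac{\pi_L(\zeta_L^P)}{\pi_L(\zeta_L^C)}\dfrac{\pi_\ell(\zeta_\ell^C)}{\pi_\ell(\zeta_\ell^\star)}}$ is precisely $\alpha_{ML}$. Expanding $\pi_L(\zeta_L^C)\,K_L(\zeta_L^P\mid\zeta_L^C)$, using the reversibility of $g_L$ to trade $\pi_\ell(\zeta_\ell^C)\,g_L(\zeta_L^P\mid\zeta_L^C)$ for $\pi_\ell(\zeta_\ell^\star)\,g_L(\zeta_L^C\mid\zeta_L^P)$, and absorbing the prefactor into the minimum collapses the expression to
\[
    g_L(\zeta_L^C\mid\zeta_L^P)\,\pi_\ell(\zeta_\ell^\star)\,\min\cbrac{\dfrac{\pi_L(\zeta_L^C)}{\pi_\ell(\zeta_\ell^C)},\,\dfrac{\pi_L(\zeta_L^P)}{\pi_\ell(\zeta_\ell^\star)}},
\]
which, by the reversibility of $g_L$ and the symmetry of the minimum, is invariant under the exchange $(\zeta_L^C,\zeta_\ell^C)\leftrightarrow(\zeta_L^P,\zeta_\ell^\star)$ and therefore equals $\pi_L(\zeta_L^P)\,K_L(\zeta_L^C\mid\zeta_L^P)$.

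I expect the main obstacle to be the central step rather than the final algebra: rigorously justifying the reversibility of the composite kernel $g_L$, which splices a coarse-level Markov transition onto a pCN draw of the fine complement across two different discretizations, and confirming that carrying the auxiliary coarse coordinate does not corrupt the fine marginal. Because the coarse chain acts only on the prolonged coarse component while the complement white noise in \eqref{HLDecomp} is drawn afresh from its prior, I would argue reversibility factorwise --- prior reversibility of the complement is immediate, the coarse factor is supplied by Proposition~\ref{lemma:Filter}, and the single delicate point is that conditioning the pCN draw on $\widetilde{\zeta}_L$ preserves symmetry, which is exactly where the pCN Remark and $\Pi P=I$ must be invoked together.
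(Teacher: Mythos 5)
Your proposal is correct in substance, but it takes a genuinely different route from the paper: the paper's entire proof is a two-line delegation --- coarse-level reversibility with respect to $\pi_\ell$ comes from Proposition~\ref{lemma:Filter}, and the passage from a $\pi_\ell$-reversible coarse subchain to $\pi_L$-detailed balance of Algorithm~\ref{Alg:HMCMC} is simply cited from Lemma~2.5 of Lykkegaard \etal~\cite{Lykkegaard2023}, not re-proved. What you do instead is reconstruct that cited lemma: you read Algorithm~\ref{Alg:HMCMC} as delayed acceptance with $\pi_\ell$ in the role of $\widehat{\pi}$ and rerun the symmetric-min computation of \eqref{DetailedBalance:Filter}; that algebra is right --- your displayed expression is indeed invariant under the exchange $(\zeta_L^C,\zeta_\ell^C)\leftrightarrow(\zeta_L^P,\zeta_\ell^\star)$, and the identification $\pi_L/\pi_\ell \propto \Lhood_L(\yobs|\cdot)/\Lhood_\ell(\yobs|\cdot)$ does recover $\alpha_{ML}$. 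The step you flag as the main obstacle, reversibility of the composite proposal $g_L$ (coarse subchain, hierarchical complement, conditioned pCN draw) with respect to $\pi_\ell$, is exactly the content the paper imports from the citation, and your factorwise sketch (complement noise drawn fresh from its prior, coarse factor reversible by Proposition~\ref{lemma:Filter}, the pCN remark together with $\Pi P = I$ tying the coarse content of $\widetilde{\zeta}_L$ to $\zeta_\ell^\star$) is essentially how that lemma is proved in the MLDA literature, so the route is viable; to make it fully self-contained you would additionally need the bookkeeping assumption, implicit in Algorithm~\ref{Alg:HMCMC}, that the input pair is consistent, i.e.\ that $\zeta_\ell^C$ is the coarse state underlying $\zeta_L^C$, since $\alpha_{ML}$ evaluates $\Lhood_\ell$ at $\zeta_\ell^C$ rather than at any restriction of $\zeta_L^C$. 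In short, the paper buys brevity by outsourcing the two-level argument; your route, once the central reversibility step is carried out in full rather than sketched, buys a self-contained proof that makes explicit where each structural ingredient (Proposition~\ref{lemma:Filter}, the pCN prior-reversibility, $\Pi P = I$, and the freshness of the complement white noise) actually enters.
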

\begin{proof}
    Since the coarse level sampling is in detailed balance with $\pi_\ell(\cdot)$ by Proposition \ref{lemma:Filter}, the remainder of the proof follows from \cite[Lemma 2.5]{Lykkegaard2023}.
\end{proof}

\begin{remark}\normalfont
    Since the Metropolis-Hasting (first stage of Alg. \ref{Alg:FilterMCMC}), the filtered Metropolis-Hasting (Alg. \ref{Alg:FilterMCMC}) and the MLDA (Alg. \ref{Alg:HMCMC}) algorithms all satisfy detailed balance and, therefore, detailed balance on each level, and are ergodic on each level, the posterior distributions on levels will converge asymptotically to the stationary distribution. Furthermore, all levels sampled using filtered Metropolis-Hastings and PDE based hierarchy will converge to the same stationary distribution; this is not guarenteed for unfiltered sampling approach. 
\end{remark}

\section{Subsurface Flow: Darcy's Equations} \label{sec:Darcy}
We demonstrate our surrogate-augmented MLMCMC approach on a typical problem in groundwater flow described by Darcy's Law, which has been employed extensively in verifying the efficiency of delayed acceptance MCMC algorithms. The governing equations with the appropriate boundary conditions are written as
\subeqs{GovEq}{
\begin{align}
    \spliteq{Darcy}{    
        \Vec{u}(\Vec{x}) + k(\Vec{x}) \Grad{p(\Vec{x})} = \Vec{f}& \quad \mathrm{in~} \Omega 
    }\\
    \spliteq{Mass}{
        \Div{\Vec{u}(\Vec{x})} = 0& \quad \mathrm{in~} \Omega
    }\\
    \spliteq{BC:P}{
        p = p_s & \quad \mathrm{on~} \Gamma_D
    }\\
    \spliteq{BC:NoFlux}{
    \Vec{u}\cdot\Vec{n} = 0 & \quad \mathrm{on~} \Gamma_N
    }
\end{align}
}
with $\Gamma := \Gamma_D \bigcup \Gamma_N = \partial \Omega$ where \eqref{Mass} is the incompressibility condition. The permeability coefficient is assumed to follow log-normal distribution $k(\Vec{x}) = \exp(\theta(\Vec{x},\omega))$ where $\theta(\Vec{x},\omega)$ is a realization of a Gaussian random field with covariance kernel defined by \eqref{ExpCov}.
We consider $\Vec{u} \in \Vec{R}$ and $p \in \Theta$ and denote their finite element representation by $\Vec{u}_h \in \Vec{R}_h$ and $p_h \in \Theta_h$. Then the weak, mixed form of \eqref{GovEq} reads
\begin{problem}
    Find $\rbrac{\Vec{u}_h,p_h} \in \Vec{R}_h \times \Theta_h$ such that
    \aligneq{GovEq:Mixed}{
            \rbrac{k^{-1} \Vec{u}_h,\Vec{v}_h} - \rbrac{\Div{\Vec{v}_h},p_h} &= \rbrac{\Vec{f}_h,\Vec{v}_h}, \quad &\forall \Vec{v}_h \in \Vec{R}_h \\
            \rbrac{\Div{\Vec{u}_h},q_h} &= 0, \quad &\forall q_h \in \Theta_h
    }
\end{problem}
The resulting saddle point system is then given by 
\eq{GovEq:Saddle}{
    \smatrix{ M_h(k) & B_h^T \\ B_h & \Vec{0} } \smatrix{ \Vec{u}_h \\ p_h } = \smatrix{ \rbrac{\Vec{f}_h,\Vec{v}_h} \\ \Vec{0} }
}
We consider the quantity of interest to be the integrated flux across $\Gamma_{out} \subset \Gamma$
\eq{MassFlux}{
    Q = \dfrac{1}{\abs{\Gamma_{out}}} \integral{\Gamma_{out}}{}{\Vec{u}(\cdot,\omega)\cdot \Vec{n} ~d\Gamma}
}

\section{Results} \label{sec:Results}

\subsection{Problem Formulation} \label{subsec:Results:Problem}

We benchmark our approach on a common inference problem in groundwater flow by estimating the posterior distribution of the mass flux through a domain boundary. In particular we consider a unit cube discretized using a structured hexahedral mesh with different spatial resolution at each level in the hierarchy. In keeping with previous multilevel MCMC studies \cite{Fairbanks2021}, we consider a four-level hierarchy, a spatial resolution 16 elements in each direction on the coarsest level and a uniform refinement factor of two. Hence the hierarchy consists of 4k, 32.7k, 262k, and 2.1M elements on levels $\ell_0$, $\ell_1$, $\ell_2$ and $\ell_3$, respectively\footnote{The exact numbers are 4096, 32768, 262144, and 2097152 elements on levels $\ell_0$, $\ell_1$, $\ell_2$ and $\ell_3$, respectively.}. The benchmarking is performed against reference distributions and statistics obtained using the PDE-only hierarchy. The hierarchical PDE sampling and the forward Darcy problem is solved using the scalable, open-source library ParELAGMC, which uses MFEM~\cite{mfem-library} to generate the fine grid finite element space, and {\it hypre}~\cite{hypre} to implement the parallel linear algebra. Each simulation is performed using hybridization algebraic multigrid~\cite{Lee17, dobrev2019algebraic}, which is solved with conjugate gradient preconditioned by {\it hypre}'s BoomerAMG. The machine learning capabilities (training and deployment) are adopted from TensorFlow~\cite{tensorflow2015-whitepaper}.

We analyze two MLM-augmented hierarchical approach: without filter and with filtering. A total of five independent chains are generated for each reference and MLM-augmented approaches and run each chain for 10k samples on the finest ($\ell_3$) level. We consider a prior of Gaussian random fields with correlation length $\lambda = 0.3$ and a marginal variance $\sigma^2 = 0.5$. We generate synthetic observational data $\yobs \in \R^{25}$ on a reference mesh of 16.7M elements and assume that $ \mathcal{F}(\Vec{\zeta}) \sim \mathcal{N}(\yobs,\sigma_\eta I)$ where $I$ is an identity matrix of appropriate dimension and take the noise to be $\sigma_\eta = 0.005$; consequently, the likelihood function has the relation $\Lhood(\yobs|\Vec{\zeta})\propto \exp (-\Vert \mathcal{F}(\Vec{\zeta})-\yobs \Vert^2/2\sigma_\eta^2)$.

Since the number of parameters and the training cost for fully-connected neural networks increases rapidly with the dimensionality of the problem, we employ convolutional neural networks (CNN). For most problem, it is understood that the size of the network typically grows with the dimensionality of the mapping, and the size of the training data required typically grow with the size of the model. Hence, approximating higher levels of the hierarchy with neural networks becomes infeasible due to the additional cost of training and memory requirements for generating and storing high-resolution training data. Therefore, we only apply the surrogate to approximate the maps on the coarsest level, $\widehat{\mathcal{F}}_0$, where~$\widehat{\cdot}$~denotes variables pertaining to the surrogate model. It should be emphasized that although we employ a surrogate model based on (convolutional) neural networks, alternative surrogate or reduced order modeling techniques such as Gaussian process models \cite{Marrel2024}, and linear-subspace \cite{Gooijer2021} and nonlinear manifold \cite{Lee2020} reduced order models can also be used.

The convolutional neural network consists of two convolutional layers, each with ReLU activation function and 16 and 32 filters, respectively. These convolutional layers are followed by two dense layers of with 80 and 25 nodes, respectively. The CNN is trained using 10,000 independently, identically distributed (i.i.d) samples generated using the PDE samples on the coarsest level while a further 5,000 samples are used for validation. The training is performed using ADAM optimizer for 800 epochs with a decaying learning rate and an additional 200 epochs, with a smaller learning rate are used to fine tune the CNN model. Figure \ref{fig:4Levels:DNNLoss} displays the mean-squared error (MSE) for the training and validation sets, and suggests a relatively accurate trained model with MSE on the order of $\mathcal{O}(10^{-3})$.
\begin{figure}[h] \centering
\includegraphicsifexists[width=0.4\textwidth]{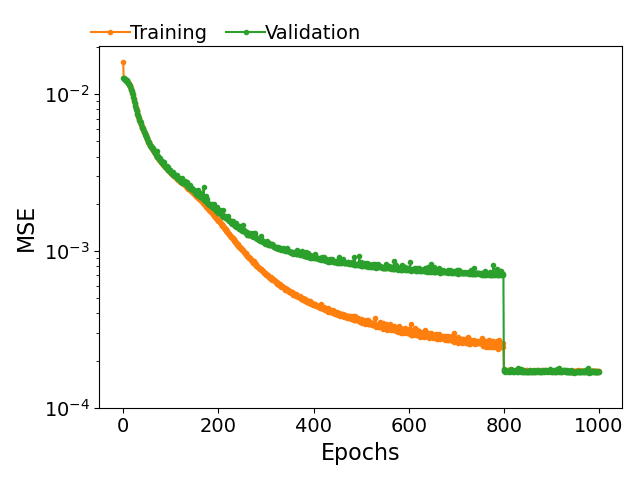} \caption{The convergence in mean-squared error (MSE) for training and validation set for the training epochs.}\label{fig:4Levels:DNNLoss}
\end{figure}

\subsection{Bayesian Inference in Subsurface Flow} \label{subsec:Results:results}

The efficiency and accuracy of the MLM-augmented algorithms can be analyze using metrics such as acceptance rate, number of effective samples, and the speed-up relative to the reference configuration. Figure \ref{fig:4Levels:Metrics} shows these metrics on each level as well as the mean and variance of $Y_\ell$ on each level. Although the mean and variance in $Y_\ell$ on the coarsest level (note that $Y_0 = Q_0$) are comparable for all methods, the variance on the subsequent coarse level greatly increases when using the unfiltered MCMC. Hence, the number of effective samples required on the coarsest level for a certain tolerance ($\epsilon^2 = 0.01$), given by \eqref{EffectiveSampleSize}, significantly increases (Fig. \ref{fig:4Levels:NEffec}). The mean and variance decay of the filtered approach are comparable to the reference configuration and, as assumed in the MLMC theory, follow a log-linear trend.
Although the acceptance rate (Fig. \ref{fig:4Levels:Acceptance}) on the coarsest level with the unfiltered approach is comparable to that of the reference configuration, the acceptance rate on the adjacent level decreases significantly, and indicates that the coarse level sampling with ML forward map (without filtering) generates poor candidate proposals. The acceptance rate on the remaining levels recover to those similar to the reference configuration. The acceptance rate of the filtered algorithm is lower compared to the other algorithms but the filtering stage filters out these poor candidates and drastically improves the acceptance rate of the more computationally expensive second-stage. This improved acceptance rate and the lower cost-per-sample reduces the computational cost of the complete hierarchical sampling. Figure \ref{fig:4Levels:SpeedUp} shows the speed up relative to the reference configuration for a hierarchy with different number of levels. As expected, in both cases, as the number of levels increases the overall efficiency gain decreases, with the unfiltered configuration demonstrating a faster decay. The four level MCMC sampling using our machine learning-augmented filtered MCMC approach results in a factor of two speed up over the standard multilevel PDE hierarchy. Previous work by the authors \cite{Fairbanks2021} have shown a factor of 3.5 speed up over the single level MCMC, hence, our MLM-augmented approach can deliver a speed of a factor of 7 over the single level approach. 

\begin{figure}[h] \centering
\begin{subfigure}[h]{0.32\textwidth} %
\includegraphicsifexists[width=\textwidth]{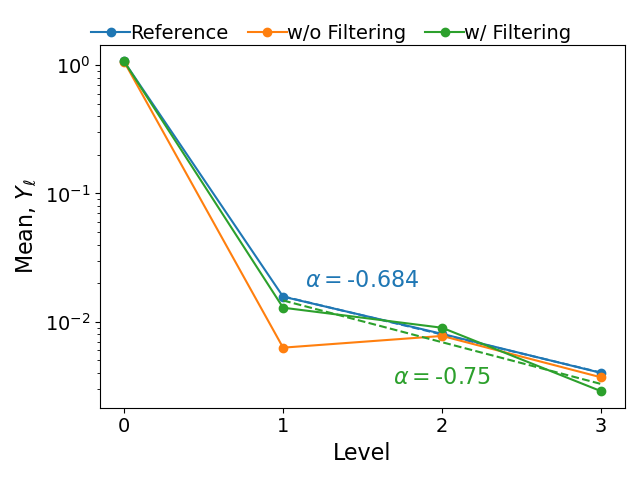} \caption{}\label{fig:4Levels:Mean}
\end{subfigure}
\begin{subfigure}[h]{0.32\textwidth} %
\includegraphicsifexists[width=\textwidth]{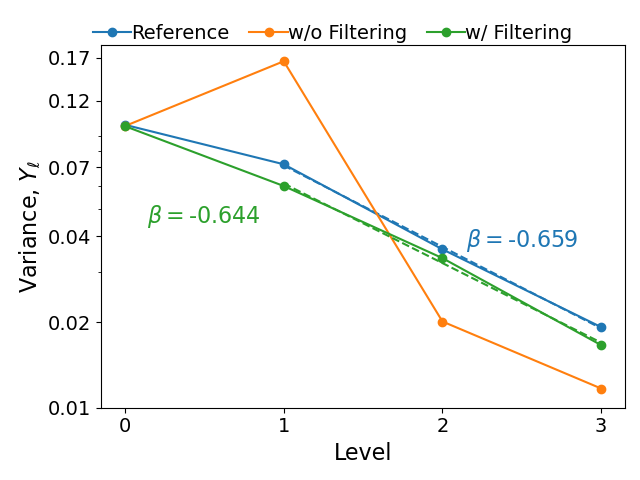} \caption{}\label{fig:4Levels:Variance}
\end{subfigure}
\begin{subfigure}[h]{0.32\textwidth} %
\includegraphicsifexists[width=\textwidth]{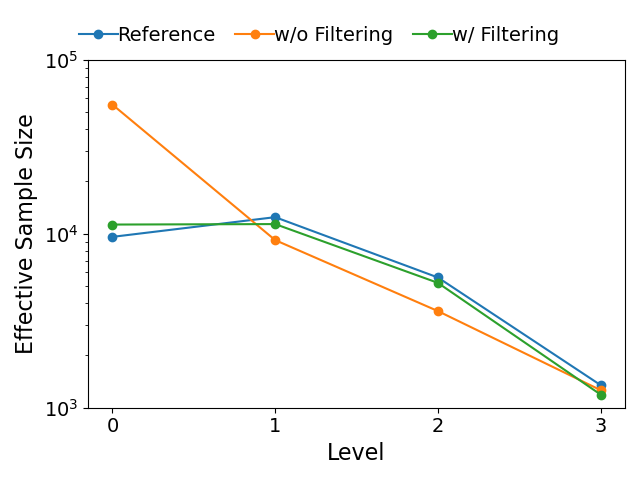} \caption{}\label{fig:4Levels:NEffec}
\end{subfigure}
\begin{subfigure}[h]{0.32\textwidth} %
\includegraphicsifexists[width=\textwidth]{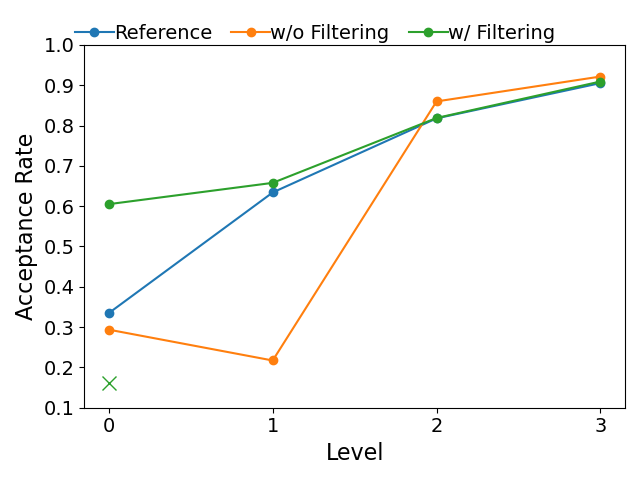} \caption{}\label{fig:4Levels:Acceptance} 
\end{subfigure}
\begin{subfigure}[h]{0.32\textwidth} %
\includegraphicsifexists[width=\textwidth]{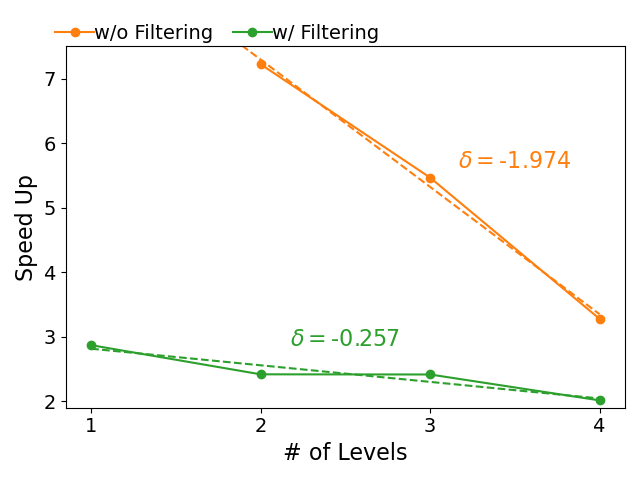} \caption{}\label{fig:4Levels:SpeedUp}
\end{subfigure}
\captionsetup{singlelinecheck=off,font=footnotesize}
\caption[]{Analysis of chains on each level of a four-level hierarchy showing: (a) the mean in $Y_\ell$, (b) the variance in $Y_\ell$, (c) the effective sample size, and (d) the acceptance ratio. (e) The speed-up relative to the reference for a hierarchy with different number of levels. The green $\times$ represents the acceptance of the complete two-stage algorithm, whereas green $\Bigcdot$ represents the acceptance rate of only the second stage.}
\label{fig:4Levels:Metrics}
\end{figure}

We define a measure of distance between probability distributions using the Wasserstein distance. The Wasserstein distance for $Q$ and $Y$ shown in Figs. \ref{fig:4Levels:Wass:Q} and \ref{fig:4Levels:Wass:Y} show that the filtering approach yeild PDF \emph{nearer} to the reference. Furthermore, we see the distance between PDFs of $Q$ increase with increasing levels in the hierarchy, while the distance in $Y$ is relatively constant. It is expected that the distance between $Y$ will decrease with increasing refinement as the discrete solution tends towards a continuous, true solution. The large deviation of the posterior PDF of $Y$ on the coarsest level (Fig. \ref{fig:PDF:Y:L1}) using the unfiltered sampling is clearly evident; the deviation in PDF of $Q$ around the high density region is also visible. The inaccuracies, due to the inaccurate coarse-level forward map, leads the sampling of regions of low posterior densities, which then propogate to the finer levels as seen in Figs. \ref{fig:PDF:Q:L3} and \ref{fig:PDF:Y:L3}. %

\begin{figure}[h] \centering
\begin{subfigure}[h]{0.32\textwidth} %
\includegraphicsifexists[width=\textwidth]{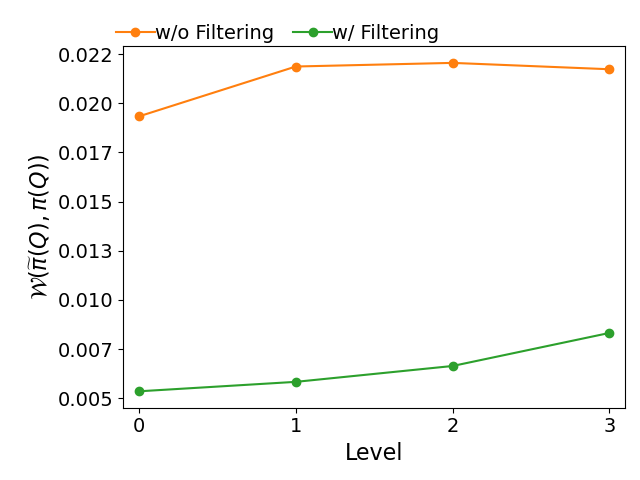} \caption{}\label{fig:4Levels:Wass:Q} 
\end{subfigure}
\begin{subfigure}[h]{0.32\textwidth} %
\includegraphicsifexists[width=\textwidth]{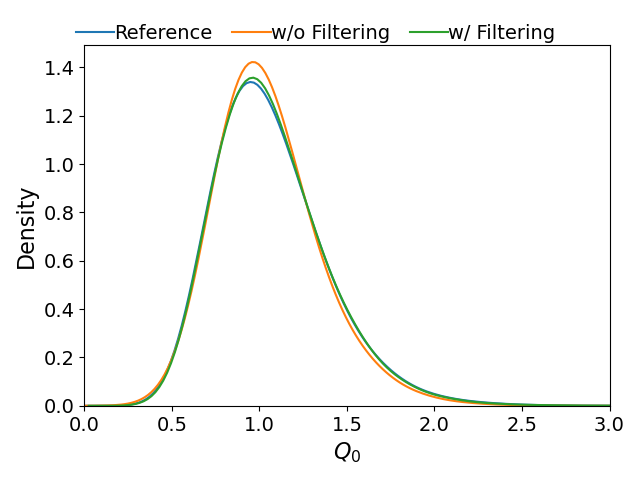} \caption{}\label{fig:PDF:Q:L0} 
\end{subfigure}
\begin{subfigure}[h]{0.32\textwidth} %
\includegraphicsifexists[width=\textwidth]{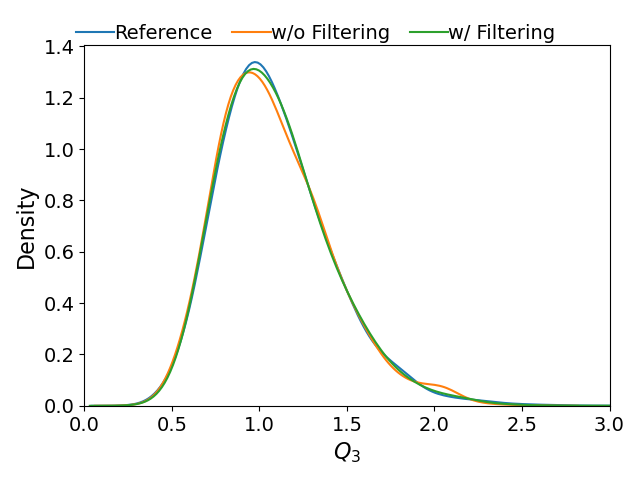} \caption{}\label{fig:PDF:Q:L3} 
\end{subfigure}
\begin{subfigure}[h]{0.32\textwidth} %
\includegraphicsifexists[width=\textwidth]{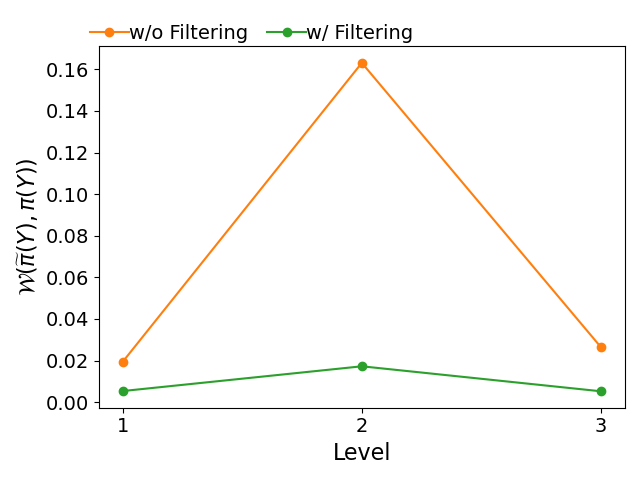} \caption{}\label{fig:4Levels:Wass:Y} 
\end{subfigure}
\begin{subfigure}[h]{0.32\textwidth} %
\includegraphicsifexists[width=\textwidth]{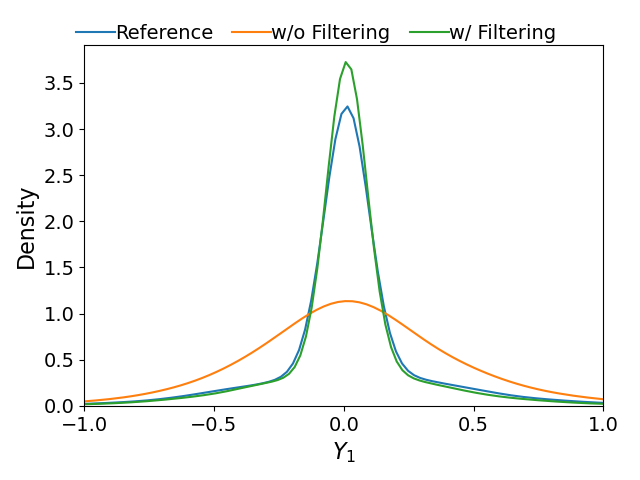} \caption{}\label{fig:PDF:Y:L1} 
\end{subfigure}
\begin{subfigure}[h]{0.32\textwidth} %
\includegraphicsifexists[width=\textwidth]{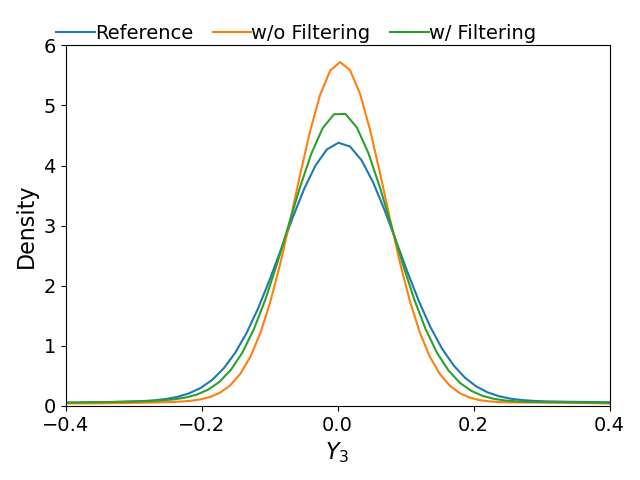} \caption{}\label{fig:PDF:Y:L3} 
\end{subfigure}
\captionsetup{singlelinecheck=off,font=footnotesize}
\caption[]{The Wasserstein distance (a,d) between the reference posterior distribution and those obtained using ML-augmented MCMC on each level, the posterior distributions on the coarsest level (b,e) and the finest level (c,f) for $Q$ (top) and $Y$ (bottom).}
\label{fig:4Levels:WassPDFs}
\end{figure}

The autocorrelation is indicative of the \emph{mixing} of the chains, and the integrated autocorrelation time (IACT) given by \eqref{IACT} represents the number of samples required to generate an independent sample. Figure \ref{fig:4Levels:AutoC:QY} presents the autocorrelation estimates (according to \eqref{NormAutoCorr}) for $Q$ and $Y$ on different levels. The two-stage algorithm displays slower mixing than the reference, and hence, suggests that a significant number of samples are being rejected by the first stage. The second stage displays similar mixing on the coarsest $\ell=0$ level to the other configurations. However, the coarse level sampling using an inaccurate forward map without filtering leads the chain to areas of low posterior density and leads to poor mixing on the subsequent finer levels. This poor mixing propogates to the more computationally-expensive finer level, and its influence is evident even on the finest level. The filtered approach displays similar mixing as the reference configuraton on all levels. Table \ref{table:4Levels:IACT} shows the integrated autocorrelation time (IACT) and cost (in seconds) for generating an independent sample. Although the cost per-independent sample of the unfiltered approach is significantly lower, the additional number of samples needed lead to a decrease in efficiency. The cost for the filtered approach is approximately 1/3 the cost of the reference, and combined with the accuracy of the filtered approach, leads to an increase in efficiency. It is important to note that the speed up presented in Fig. \ref{fig:4Levels:SpeedUp} is for a chain of fixed length (10k samples on the finest level) and demonstrate a factor of 3.5 and factor 2 speed up due to the unfiltered and filtered approaches, respectively, over the reference approach. Table \ref{table:4Levels:IACT} presents the theoretical speed up when considering the number of effective samples needed to tolerance $\epsilon^2 = 0.01$ (see Appendix \ref{Appen:NEffective}) given the cost per independent samples and variance ($\mathbb{V}[Y_\ell]$). Here, we see the divergence of the coarse level chain in the unfiltered approach leads to an overall decrease in efficiency, while the filtered approach still yields a theoretical speed of a factor of 1.2.

\begin{figure}[h] \centering
\begin{subfigure}[h]{0.32\textwidth} %
\includegraphicsifexists[width=\textwidth]{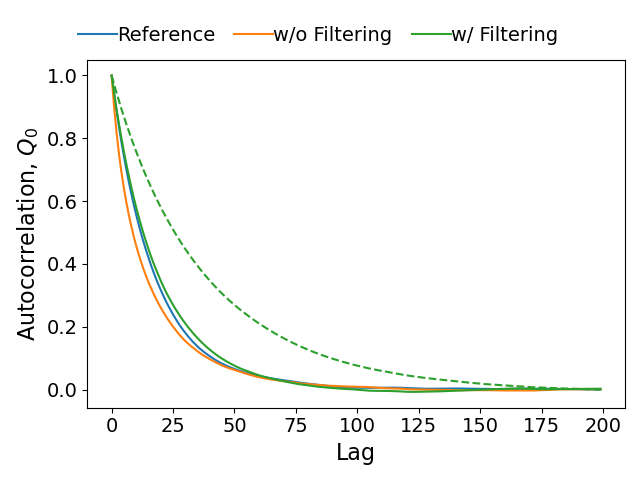} \caption{} \label{fig:AutoC:Q:L0}
\end{subfigure}
\begin{subfigure}[h]{0.32\textwidth} %
\includegraphicsifexists[width=\textwidth]{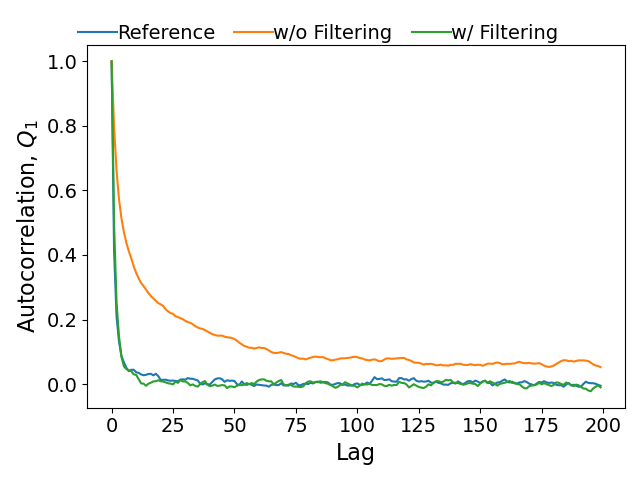} \caption{} \label{fig:AutoC:Q:L1} 
\end{subfigure}
\begin{subfigure}[h]{0.32\textwidth} %
\includegraphicsifexists[width=\textwidth]{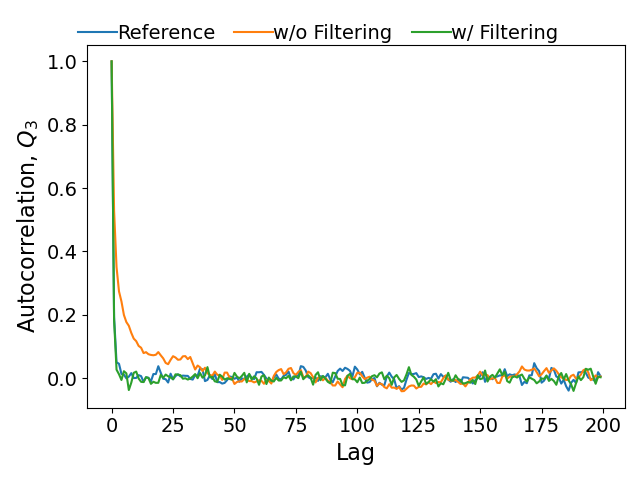} \caption{} \label{fig:AutoC:Q:L3} 
\end{subfigure}
\begin{subfigure}[h]{0.32\textwidth} %
\begin{tcolorbox}[width=\textwidth,opacityfill=0]
\end{tcolorbox}
\end{subfigure}
\begin{subfigure}[h]{0.32\textwidth} %
\includegraphicsifexists[width=\textwidth]{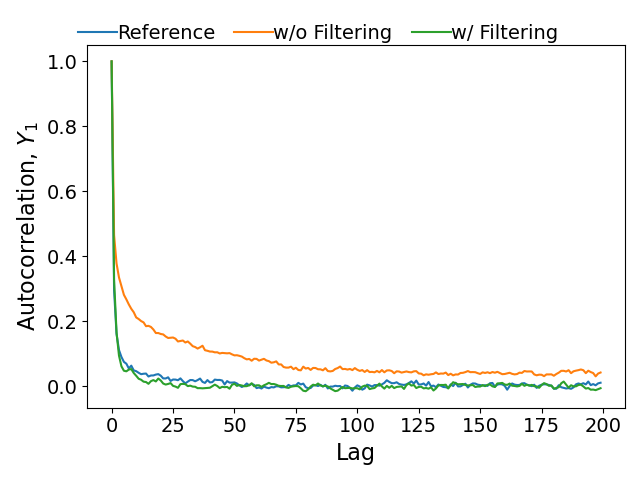} \caption{} \label{fig:AutoC:Y:L1} 
\end{subfigure}
\begin{subfigure}[h]{0.32\textwidth} %
\includegraphicsifexists[width=\textwidth]{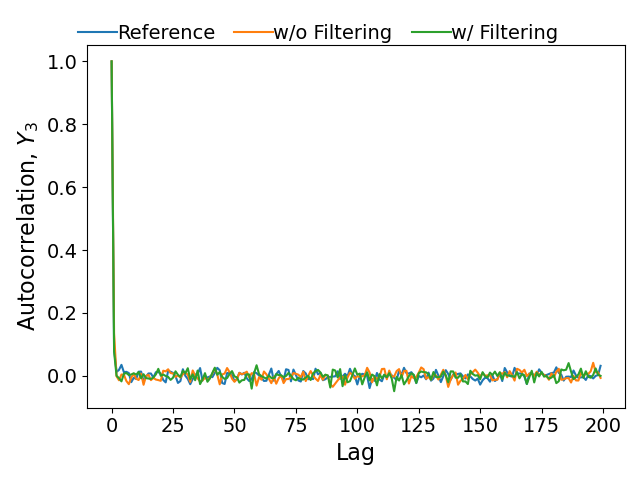} \caption{} \label{fig:AutoC:Y:L3} 
\end{subfigure}
\captionsetup{singlelinecheck=off,font=footnotesize}
\caption[]{Autocorrelation estimates of quantity of interest $Q$ (top) and corresponding $Y$ (bottom) on different levels for increasing lag times. In Fig. 
\ref{fig:AutoC:Q:L0}, the solid line represent the estimate for only the second stage, while the dashed line represents the estimate of the complete two-stage algorithm.}
\label{fig:4Levels:AutoC:QY}
\end{figure}

\begin{table}[h]
\caption{Average (over five chains) IACT of $Y$, the cost per independent sample, in seconds, on each level of a four-level hierarchy, total cost and the speed up. Note that $Y_0 = Q_0$; the value in the bracket represents the IACT of the second-stage.} \label{table:4Levels:IACT}
\centering
\begin{tabular}{ l | c c c c | c c c c | c c} 
\hline 
\hline 
 & \multicolumn{4}{c|}{IACT} & \multicolumn{4}{c|}{Cost per Independent Sample} & Total Cost & Speed Up \\
\hline
Models  &  $\ell_0$ & $\ell_1$ & $\ell_2$ & $\ell_3$ &  $\ell_0$ & $\ell_1$ & $\ell_2$ & $\ell_3$ & & \\ 
\hline 
Reference  & 38 & 5.5  & 3.6 & 3.1 & 1.46 & 0.63  & 1.55 & 14.3 & 49803 & 1.0 \\
Unfiltered & 33 & 49.4 & 4.0 & 3.1 & 0.07 & 4.53 & 3.25 & 14.66 & 75655 & 0.66 \\
Filtered   & 77 (39) &5.4 &  3.4 & 3.1 & 0.91 & 0.58 &  1.49 & 14.08 & 41386 & 1.20 \\
\hline 
\hline
\end{tabular}
\end{table}

\begin{remark}\normalfont
    Similar to \cite{Fairbanks2021}, each $\widehat{Y}_\ell$ estimate uses only two levels $\ell$ and $\ell-1$, whereas \cite{Dodwell2015} uses a hierarchical sampling on all coarser levels for the estimation. %
\end{remark}

\section{Conclusion} \label{sec:Conclusion}

This paper introduces an efficient approach for multilevel Markov Chain Monte Carlo (MCMC), by leveraging a geometric multigrid hierarchy and augmenting the coarse level with a machine learning model (MLM). By integrating machine learning model on the coarsest level, we significantly reduce the computational cost associated with generating training data and training the model. The MLM allows for a computationally efficient evaluation of coarse level proposed samples and limits propagation of poor samples to the computationally more expensive finer levels. Rather than simply replacing a numerical model with a machine learning model, which introduces approximation errors, we introduce an additional filtering stage on the coarsest level to circumvent this approximation error and limit proposal of these poor samples to the finer levels. We prove that our filtered MCMC algorithm satisfies detailed balance and is a consistent MCMC sampling algorithm. Furthermore, we derive conditions on the accuracy of the MLM, compared to the reference model, to yield more efficient sampling. We demonstrate the practical applicability of our technique on a large-scale problem in groundwater flow and report a factor of two speedup over the reference method, while maintaining high levels of accuracy. The posterior densities and the statistical moments obtained using our filtered MCMC algorithm were in good agreement with those obtained using the reference sampler.  Furthermore, both our algorithm and the standard multilevel method exhibit similar decay in mean and variance that align well with theoretical expectation. These findings underscore the robustness, efficiency, and theoretical consistency of our proposed multilevel MCMC approach, offering significant reducation in computational cost and increase in efficiency.

\section*{Acknowledgements}

This work was supported by U.S. Department of Energy's Advanced Scientific Computing Research (ASCR) under award 33825/SCW0897 and was performed under the auspices of the U.S. Department of Energy by Lawrence Livermore National Laboratory under Contract DE-AC52-07NA27344.

\bibliographystyle{unsrt}
\bibliography{references}  %

\appendix

\section{Appendix}

\subsection{Integrated Autocorrelation Time}

The number of samples between independent samples is determined by the integrated autocorrelation time (IACT)
\eq{IACT}{
    \widehat{\tau} = 1 + 2 \sum_{\tau=1}^M \widehat{\rho}_Q(\tau)
}
where the normalized autocorrelation function is estimated as
\eq{NormAutoCorr}{
    \widehat{\rho}_Q(\tau) = \dfrac{1}{N-\tau} \sum_{i=1}^M \dfrac{\rbrac{Q^{(i)} - \widehat{\mu}_Q} \rbrac{Q^{(i+\tau)} - \widehat{\mu}_Q}}{\widehat{\sigma}^2_Q}
}
with mean $\widehat{\mu}_Q$ and variance $\widehat{\sigma}_Q$ of $\cbrac{Q^{(i)}}_{i=1}^N$ and $M \ll N$.

\subsection{Number of Effective Samples} \label{Appen:NEffective}

Consider the effective cost of generating an independent sample on level $\ell$
\eq{EffecCost}{
    \overline{C_\ell} := \ceil*{\tau_\ell} \rbrac{C_\ell + \sum_{i=1}^{\ell-1} T_i C_i}
}
where $T_i = \prod_{j=0}^{i} \ceil*{\tau_j}$ is the effective subsampling rate of the coarser levels and the following bound on error of the multilevel estimator
\expression{
    \sum_{\ell=0}^L \dfrac{\sigma^2_\ell}{N_\ell} \leq \dfrac{\epsilon^2}{2}
}
where $\sigma^2_\ell = \Var[\pi_\ell,\pi_{\ell-1}]{Y_\ell}$ is the sample variance with respect to the joint distributions $\pi_\ell$ and $\pi_{\ell-1}$. The optimization problem for determining the optimal number of independent samples reads
\aligneq{Neff:OptimProbl}{
    \min &~\sum_{\ell=0}^L \overline{C_\ell} N_\ell \\
    \mathrm{s.t}& ~\sum_{\ell=0}^L \dfrac{\sigma^2_\ell}{N_\ell} \leq \dfrac{\epsilon^2}{2}
}
with the Lagrangian
\expression{
    \mathbb{L} = \sum_{\ell=0}^L \overline{C_\ell} N_\ell + \lambda \rbrac{\sum_{\ell=0}^L \dfrac{\sigma^2_\ell}{N_\ell} - \dfrac{\epsilon^2}{2}}
}
and the stationarity condition 
\eq{Neff:Stationarity}{
    \dfdx{\mathbb{L}}{N_\ell} = \overline{C_\ell} - \lambda \dfrac{\sigma^2_\ell}{N^2_\ell} = 0 \rightarrow \sqrt{\dfrac{\lambda \sigma^2_\ell}{\overline{C_\ell}}} = N_\ell
}
Subsituting \eqref{Neff:Stationarity} into the primal feasibility condition, where the inequality constraint is treated as a binding, equality constraint
\eq{Neff:PrimalFeasibility}{
    \sum_{\ell=0}^L \dfrac{\sigma^2_\ell}{N_\ell} - \dfrac{\epsilon^2}{2} = 0
}
yields
\expression{
    \sum_{\ell=0}^L \sqrt{\dfrac{\sigma^2_\ell \overline{C_\ell}}{\lambda}} - \dfrac{\epsilon^2}{2} = 0 \rightarrow \sqrt{\lambda} = \dfrac{2}{\epsilon^2} \sum_{\ell=0}^L \sqrt{\sigma^2_\ell \overline{C_\ell}}
}
Subsituting back into \eqref{Neff:Stationarity} yields the desired expression for the number of effective samples as provided in \eqref{EffectiveSampleSize}.

\end{document}